\newtheorem{theorem}{Theorem}
\newtheorem*{theorem*}{Theorem}
\newtheorem{defn}{Definition}
\newtheorem{prop}{Proposition}
\author{Hugo Caselles-Dupr\'e\textsuperscript{1,2}, Michael Garcia-Ortiz\textsuperscript{2}, David Filliat\textsuperscript{1} \\
\textsuperscript{1}Flowers Laboratory (ENSTA Paris \& INRIA), \textsuperscript{2}AI Lab (Softbank Robotics Europe)\\
\href{mailto:caselles@ensta.fr}{caselles@ensta.fr}, \href{mailto:mgarciaortiz@softbankrobotics.com}{mgarciaortiz@softbankrobotics.com}, \href{mailto:david.filliat@ensta.fr}{david.filliat@ensta.fr}}
\title{Symmetry-Based Disentangled Representation Learning requires Interaction with Environments}
\begin{document}

\maketitle


\begin{abstract}



Finding a generally accepted formal definition of a disentangled representation in the context of an agent behaving in an environment is an important challenge towards the construction of data-efficient autonomous agents. \cite{higgins2018towards} recently proposed Symmetry-Based Disentangled Representation Learning, a definition based on a characterization of symmetries in the environment using group theory. We build on their work and make observations, theoretical and empirical, that lead us to argue that Symmetry-Based Disentangled Representation Learning cannot only be based on static observations: agents should interact with the environment to discover its symmetries. Our experiments can be reproduced in Colab \footnote{\url{https://colab.research.google.com/drive/1KVlSV24c687N_4TLJWwGTkjt3sh9ufWW}} and the code is available on GitHub \footnote{\url{https://github.com/Caselles/NeurIPS19-SBDRL}}.


\end{abstract}

\section{Introduction}


Disentangled Representation Learning aims at finding a low-dimensional vector representation of the world for which the underlying structure of the world is separated into disjoint parts (i.e., disentangled) reflecting the compositional nature of the said world.
Previous work \citep{higgins2017darla,raffin2019decoupling} has shown that agents capable of learning disentangled representations can perform data-efficient policy learning. However, there is no generally accepted formal definition of disentanglement in Representation Learning, which prevents significant progress in this emerging field. 

Recent efforts have been made towards finding a proper definition \citep{locatello2019challenging}. In particular, \cite{higgins2018towards} defines Symmetry-Based Disentangled Representation Learning (SBDRL), by taking inspiration from the successful study of symmetry
transformations in Physics. Their definition focuses on the transformation properties of the world. They argue that transformations that change only some properties of the underlying world state, while leaving all other properties invariant, are what gives exploitable structure to any kind of data. 
They distinguish between linear and non-linear disentangled representations, which models whether the transformation affects the representation in a linear or non-linear way. Supposedly, linearity should be more useful for downstream tasks such as Reinforcement Learning or auxiliary prediction tasks. Their definition is intuitive and provides principled resolutions to several points of contention regarding what disentanglement is. For clarity, we refer to a representation as SB-disentangled if it is disentangled in the sense of SBDRL, and as LSB-disentangled if linear disentangled.

We build on the work of \cite{higgins2018towards} and make observations, theoretical and empirical, that lead us to argue that SBDRL
requires interaction with environments. The necessity of having interaction has been suggested before \citep{thomas2017independently}. We propose a proof for SBDRL. 

As in the original work, we base our experiments on a simple environment, where we can formally define and manipulate a SB-disentangled representation. This simple environment is 2D, composed of one circular agent on a plane that can move left-right and up-down. The world is cyclic: whenever the agent steps beyond the boundary of the world, it is placed at the opposite end (e.g. stepping up at the top of the grid places the object at the bottom of the grid).



We prove, for this environment, that the minimal number of dimensions of the representation required for it to be LSB-disentangled is counter-intuitive (i.e. 4).
Indeed, the natural number of dimensions required to describe the state of the world (i.e. 2) is not enough to describe its symmetries in a linear way, which is supposedly ideal for subsequent tasks. Additionally, learning a non-linear SB-disentangled representation is possible, but current approaches are not designed to model the effect of the world's symmetries on the representation, a key aspect of SBDRL which we present later. We thus ask: how is one supposed to, in practice, learn a (L)SB-disentangled representation?


We propose two approaches that arise naturally, one where representation and world symmetries effect on it are learned separately and one where they are learned jointly. 
For both scenarios, we formally define what could be a proper representation to learn, using the formalism of SBDRL. We propose empirical implementations that are able to successfully approximate these analytically defined representations. Both empirical approaches make use of transitions $(o_t, a_t, o_{t+1})$ rather than still observations $o_t$, which validates the main point of this paper: Symmetry-Based Disentangled Representation Learning requires interaction with the environment.

Ultimately, the goal of such representations is to facilitate the learning of downstream tasks. We study the efficiency of (L)SB-disentangled representation on a particular downstream task: learning an inverse model. Our results suggests that (L)SB-disentangled indeed facilitates the learning of such downstream task.

\begin{figure}[!h]
\centering
\includegraphics[scale=.3]{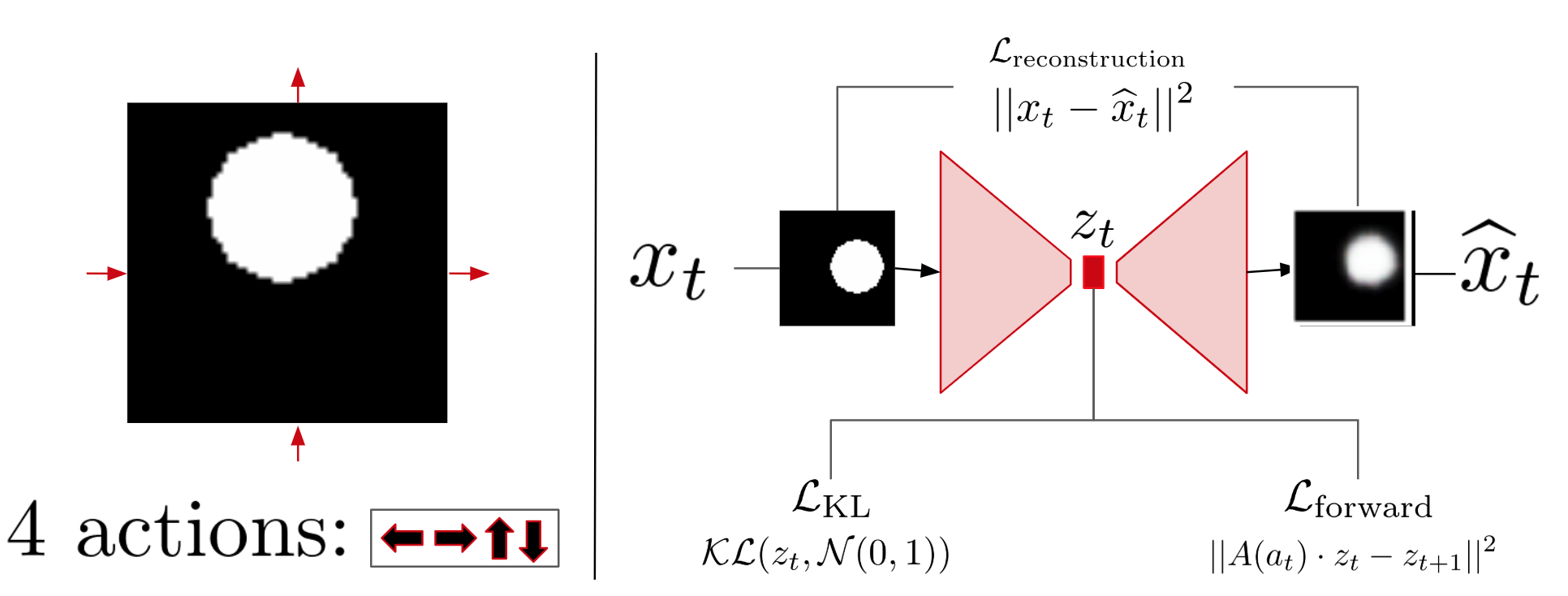}
\caption{\textbf{Left:} Environment studied in this paper. \textbf{Right: }Proposed architecture for learning a LSB-disentangled representation in the environment at the left as presented in section \ref{sec:avae}. 
}
\label{fig:env-avae}
\end{figure}

Our contributions are therefore the following: 

\begin{itemize}
    \item We prove that interaction with the environment, i.e. the use of transitions, is necessary for SBDRL, and illustrate it empirically.
    \item We propose two alternatives for learning linear and non-linear SB-disentangled representation in practice, both using transitions rather than still observations. Using a simple environment, we describe both solutions theoretically and validate them empirically.
    \item We empirically demonstrate the efficiency of using SB-disentangled for a downstream task (learning an inverse model).
\end{itemize}




\section{Symmetry-Based Disentangled Representation Learning}

\cite{higgins2018towards} defines Symmetry-Based Disentangled Representation Learning (SBDRL) in an attempt to formalize disentanglement in Representation Learning. The core idea is that SB-disentanglement of a representation is defined with respect to a particular decomposition of the symmetries of the environment. Symmetries are transformations of the environment that leave some aspects of it unchanged. For instance, for an agent on a plane, translations of the agent on the $y$-axis leave its $x$ coordinate unchanged. They formalize this using group theory. Groups are composed of these transformations, and group actions are the effect of the transformations on the state of the world and representation.


The proposed definition of SB-disentanglement supposes that these symmetries are formally defined as a group $G$ (equipped with composition) that can be decomposed into a direct product $G = G_1 \times .. \times G_n$. We now recall the formal definition of a SB-disentangled representation w.r.t to this group decomposition. We advise the reader to refer to the detailed work of \cite{higgins2018towards} for any clarification. Let $W$ be a set of world-states $W=(w_1, .., w_m) \in \mathbb{R}^{m\times d}$, where each state $w_i$ is a d-dimensional vector. We suppose that there is a generative process $b : W \to O$ leading from world-states to observations (these could be pixel, retinal, or any other potentially multi-sensory observations), and an inference process $h : O \to Z$ leading from observations to an agent’s representations.  We consider the composition $f : W \to Z, f = h \circ b$. Suppose also that there is a group $G$ of symmetries acting on $W$ via a group action $\cdot_{\mathcal{W}} : G \times W \to W$. A world is thus defined by $(W, \cdot_{\mathcal{W}})$. We would like to find a corresponding group action $\cdot_Z : G \times Z \to Z$ so that the symmetry structure of $W$ is reflected in $Z$. We also want the group action $\cdot_Z$ to be disentangled, which means that applying $G_i$ to $Z$ leaves all sub-spaces of $Z$ unchanged but the one corresponding to the transformation $G_i$. Formally, the representation $Z$ is SB-disentangled with respect to the decomposition $G = G_1 \times .. \times G_n$ if:

\begin{enumerate}
    \item There is a group action $\cdot_Z : G \times Z \to Z$.
    \item The map $f : W \to Z$ is equivariant between the group actions on $W$ and $Z$:
    
    \begin{center} \includegraphics[scale=0.2]{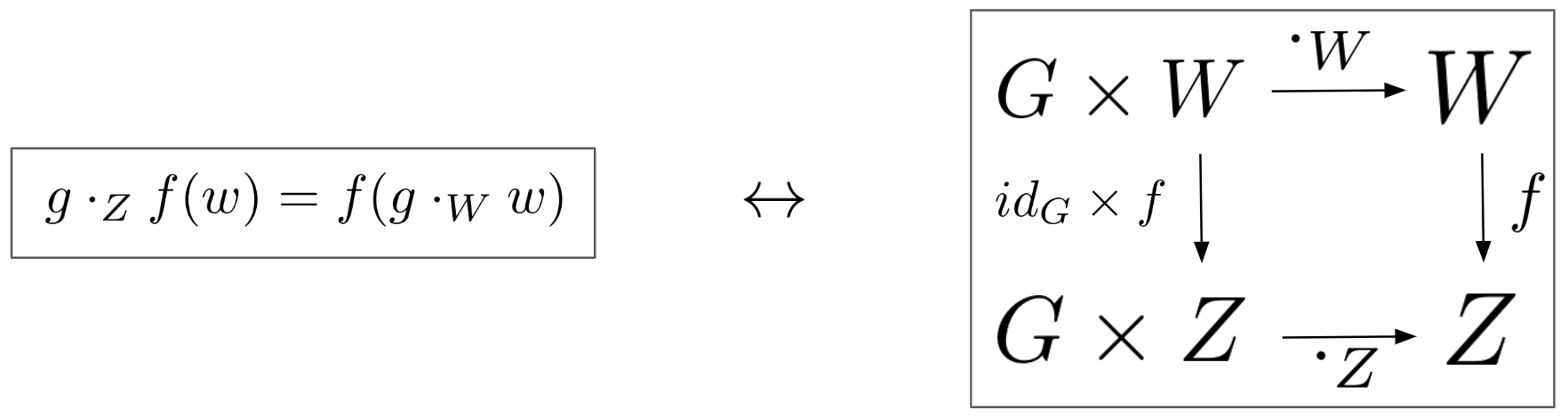}\end{center}
    \item There is a decomposition $Z = Z_1 \times .. \times Z_n$ such that each $Z_i$ is fixed by the action of all $G_j , j \neq i$ and affected only by $G_i$.
\end{enumerate}

This definition of SB-disentangled representations does not make any assumptions on what form the group action should take when acting on the relevant disentangled vector subspace. However, many subsequent tasks may benefit from a SB-disentangled representation where the group actions transform their corresponding disentangled subspace linearly. Such representations are termed linear SB-disentangled representations, which we refer to as LSB-disentangled representations.

\section{Symmetry-Based Disentangled Representation Learning requires interaction with environments}

In this section we prove the main claim of this paper: SBDRL requires interaction with environments. By ``interaction with environments'' we refer to the fact that in order to learn a SB-disentangled representation, one should \textbf{not} use a training set composed of still samples $(o_t, o_{t+1}, ...)$, but rather transitions $((o_t, a_t, o_{t+1}), (o_{t+1}, a_{t+1}, o_{t+2}), ...)$.

We begin by observing that SBDRL definition is actually two-fold. The definition of a SB-disentangled representation w.r.t the decomposition $G = G_1 \times .. \times G_n$ is composed of two main properties:

$\left. \text{\parbox{0.5\linewidth}{
\begin{enumerate}
  \setcounter{enumi}{1}
  \item There is a group action $\cdot_Z : G \times Z \to Z$.
  \setcounter{enumi}{2}
  \item The map $f : W \to Z$ is equivariant between the group actions on $W$ and $Z$.
\end{enumerate}
}} \right \}$ Definition of a Symmetry-Based representation.

$\left. \text{\parbox{0.5\linewidth}{
\begin{enumerate}
  \setcounter{enumi}{3}
  \item There is a decomposition $Z = Z_1 \times .. \times Z_n$ such that each $Z_i$ is fixed by the action of all $G_j , j \neq i$ and affected only by $G_i$.
\end{enumerate}
}} \right \}$ Disentanglement property.

The first two points define what a SB representation is. It's a representation for which the effect of group actions on the world state is the same as the effect on the representation itself. The third point characterizes what disentanglement is for a SB representation.

In practice, it seems natural to first know how to learn a representation that satisfy the first two points, i.e. a SB representation. Based on this, we can develop methods that enforce disentanglement. 

Hence we ask, how can one learn a SB representation? This task involves knowledge about how the group action affect $Z$. The group action is defined to be the effect of symmetries on the representation. These symmetries can be translations, rotations, time translations, etc. In a Machine Learning paradigm, we would design an algorithm that learns from examples. We thus need, in practice, a way to apply these transformations on observations of the world $(o_t)_{t=1..n}$ and observe the result $(g_t \cdot_Z o_t = o_{t+1})_{t=1..n}$.

We thus make an analogy between the effect of a symmetry $g$ (by the group action $\cdot_{\mathcal{W}}$) on the environment $(o_1, g, g \cdot_{\mathcal{W}}o_1=o_2)$, and a transition that uses the dynamics $f$ of the environment $(o_t, a_t, f(o_t, a_t) = o_{t+1})$. It allows us to consider a more realistic scenario where we have an agent in an environment, and we can apply the group actions to this agent. In our analogy we simply say that $o_1 = o_t$, $o_2 = o_{t+1}$ and $a_t =g$ and $\cdot_{\mathcal{W}}=f$. 

However, we do not make a total confusion between symmetries and regular actions that can be found in any environment. A symmetry is an element of a group (in the mathematical sense) of functions $g : W \rightarrow W$, and the binary operation of the group is composition. In that sense, these functions can effectively be considered as actions, because actions take the environment from one state to another through the dynamics $f$, and symmetries take the environment from one state to another through the group action $\cdot_{\mathcal{W}}$. 

It is important to mention that not all actions are symmetries, for instance the action of eating a collectible item in the environment is not part of any group of symmetries of the environment because it might be irreversible. 

More formally, Theorem \ref{thm:2} provides a mathematical proof that we need interaction with environments. 

\begin{theorem}
\label{thm:2}

Suppose we have a SB representation $(f,\cdot_Z)$ of a world $\mathcal{W}_0=(W=(w_1, .., w_m)\in{\mathbb{R}^{m\times d}}, \cdot_{\mathcal{W}_0})$ w.r.t to $G = G_1 \times ... \times G_n$ using a training set $\mathcal{T}$ of unordered observations of $\mathcal{W}_0$. Let $W_k$ be the set of possible values for the $k^{th}$ dimension of $w\in W$. \\ Then:

\begin{enumerate}
    \item There exists at least $k_{W,G} = n [(\min_{k}(card(W_k))!] - 1$ worlds $(\mathcal{W}_1, .., \mathcal{W}_{k_{W,G}})$ equipped with the same world states $\mathcal{W}_i=(w_1, .., w_m)$ and symmetries $G$, but different group actions $\cdot_{\mathcal{W}_i}$.
    \item For these worlds, $(f,\cdot_Z)$ is not a SB representation.
    \item These worlds can produce exactly the same training set $\mathcal{T}$ of still images.
\end{enumerate}

\begin{proof} Consider two identical worlds $(\mathcal{W}_1, \mathcal{W}_2)$ equipped with the same symmetries $G$. Suppose that they are given two different group actions $\cdot_{1,Z}$ and $\cdot_{2,Z}$ i.e. the effect of $G$ on $\mathcal{W}_1$ is not the same as its effect on $\mathcal{W}_2$. Then, given a fixed observation $o$, it is impossible to tell if $o$ is an observation of $\mathcal{W}_1$ or $\mathcal{W}_2$. It is only possible to tell if we have access to transitions $(o_t, g_t, o_{t+1})_{t=1..n}$ and observe the result $(g_t \cdot_Z z_t = z_{t+1})_{t=1..n}$. See Appendix \ref{app:proofs1} for full proof. \phantom\qedhere \end{proof}

\end{theorem}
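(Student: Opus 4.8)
The plan is to exploit the single structural fact that underlies all three claims: a still observation $o = b(w)$ is a function of the world-state $w$ alone and is completely \emph{blind} to the group action $\cdot_{\mathcal{W}}$. Consequently, two worlds sharing the same state set $W$ but carrying different actions are indistinguishable on unordered data, and a representation $(f,\cdot_Z)$ trained only on such data has no way to ``see'' which action is in force. I would first record this observation precisely, and then use it to dispatch the three items: (1) count how many genuinely different actions can be placed on the same $W$; (2) show that the learned $(f,\cdot_Z)$ can be equivariant for at most one of them; and (3) show that they all emit the same $\mathcal{T}$.

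Item (1) is the combinatorial heart and the step I expect to be the main obstacle. I would produce the required worlds by explicit construction rather than by classifying all actions. Let $c = \min_k \mathrm{card}(W_k)$ and fix a dimension $k^\star$ attaining this minimum. For each factor index $i \in \{1,\dots,n\}$ and each permutation $\sigma$ of the $c$ values of $W_{k^\star}$, I would define a new action $\cdot_{\mathcal{W}_{i,\sigma}}$ by relabeling the effect of $G_i$ on the $k^\star$-th coordinate through $\sigma$ (equivalently, conjugating the restricted action by the coordinate bijection induced by $\sigma$), leaving the action of the other factors untouched. Relabeling a valid action yields a valid action automatically, so each $\cdot_{\mathcal{W}_{i,\sigma}}$ satisfies the group-action axioms. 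The work is then twofold: checking that distinct pairs $(i,\sigma)$ give distinct actions on $W$, and carefully accounting for the pairs that merely recover $\cdot_{\mathcal{W}_0}$, so as to reach the stated lower bound $n\,[\,c!\,]-1$ without double counting. The delicate points are ruling out accidental coincidences between the relabeled actions and confirming that each modified map respects the product structure $G = G_1 \times \cdots \times G_n$ and is a bona fide group action.

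Items (2) and (3) are then short. For (3), since $b$ ignores the action, every world $\mathcal{W}_i$ produces exactly the same set of observations $\{b(w) : w \in W\}$; an unordered training set $\mathcal{T}$ is just a sub(multi)set of these images and can therefore be realized identically by all of them. For (2), recall that equivariance for $\mathcal{W}_0$ means $f(g \cdot_{\mathcal{W}_0} w) = g \cdot_Z f(w)$ for all $g,w$. If the same $(f,\cdot_Z)$ were also equivariant for some $\mathcal{W}_i$, then $f(g \cdot_{\mathcal{W}_i} w) = g \cdot_Z f(w) = f(g \cdot_{\mathcal{W}_0} w)$; choosing $(g,w)$ at which $\cdot_{\mathcal{W}_i}$ and $\cdot_{\mathcal{W}_0}$ disagree and using injectivity of $f$ (which any information-preserving representation must satisfy, and which can otherwise be arranged by selecting worlds that differ on $f$-separated states) forces $g \cdot_{\mathcal{W}_i} w = g \cdot_{\mathcal{W}_0} w$, a contradiction. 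Hence $(f,\cdot_Z)$ fails to be a SB representation for every $\mathcal{W}_i \neq \mathcal{W}_0$, which is precisely the obstruction that can only be removed by supplying transitions $(o_t,g_t,o_{t+1})$ that expose the action directly.
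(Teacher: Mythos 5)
Your proposal is correct and takes essentially the same route as the paper's appendix proof: the core observation that still observations $b(w)$ are blind to the group action, the construction of alternative worlds by shuffling/relabeling states along a minimal-cardinality axis for each of the $n$ symmetry factors to get the factorial count, the equivariance contradiction at a pair $(g,w)$ where $\cdot_{\mathcal{W}_i}$ and $\cdot_{\mathcal{W}_0}$ disagree, and the shared-state-space argument for identical training sets. The two ``delicate points'' you flag but defer are in fact glossed over by the paper itself --- it writes ``Necessarily, $f(g\cdot_{\mathcal{W}_i}w) \neq f(g\cdot_{\mathcal{W}_0}w)$'' without stating the injectivity of $f$ that this requires, and it claims $n\,[c!]-1$ worlds even though its construction (like yours) produces at most $n(c!-1)$ worlds distinct from $\mathcal{W}_0$ once the pairs $(i,\mathrm{id})$ are discounted --- so your write-up is, if anything, more candid about where the rigor is thin.
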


Using Theorem \ref{thm:2}, we can deduce that for a given dataset of still images collected in a world, it is impossible to describe the action of symmetries on the world. The dataset could come from a number of different worlds where symmetries act differently. Hence the need for transitions. For example, in a world where the agent can change color along a hue axis, the succession of colors can be (red, green, blue, red, ...), or (red, blue, green, red, ...). Then the world states are identical, the symmetries also. Yet, the effect of the symmetries are not the same, i.e. $\cdot_{1,W} \neq \cdot_{2,W}$.

Still, it is not clear how to discover the symmetries $G$ of a world. \cite{higgins2018towards} propose to use active perception or causal manipulations of the world to empirically determine them. Having this in mind, we note that high-level actions in an environment often correspond to symmetries, such as translations along cartesian axis, rotations, changes of color, changes related to time (no-op action). Actions could then be used as replacement to symmetries, and one could learn SB representations using traditional transitions $(o_t, a_t, o_{t+1})_{t=1..n}$ that are readily available in most environments. In the rest of the paper, we validate this approach empirically.

\section{Considered environment}
\label{sec:env}



In this paper, we consider a simplification of the environment studied in \citep{higgins2018towards}. This environment is 2D, composed of one circular agent on a plane that can move left-right and up-down, see Fig.\ref{fig:env-avae}. 
Whenever the agent steps beyond the boundary of the world, it is placed at the opposite end (e.g. stepping up at the top of the grid
places the object at the bottom of the grid). The world-states can be described in two dimensions: $(x,y)$ position of the agent. All of our experimental results are based on this environment. It is simple, yet presents the basis for a navigation environment in 2D. We chose this environment because we are able to define theoretically SB-disentangled representations, without making any approximation. We implement this simple environment using Flatland \citep{caselles2018flatland}. The code is available in Colab\footnote{\url{https://colab.research.google.com/drive/1KVlSV24c687N_4TLJWwGTkjt3sh9ufWW}} and Github\footnote{\url{https://github.com/Caselles/NeurIPS19-SBDRL}}. All architecture and hyperparameters details are specified in Appendix \ref{app:hyperparams}.

\section{Theoretical analysis}

We first provide a theoretical analysis of what can be learned in the considered environment, in the formalism of SBDRL. Learning a non-linear SB-disentangled representation of dimension $2$ is possible. If $(x,y)$ is the position of the object, then learning these two coordinates as well as the cyclical effect of translations is enough to create a SB-disentangled representation of dimension $2$.









However, it is not the case for LSB-disentangled representations. We provide a theorem that proves it is impossible to learn a LSB-disentangled representation of dimension $2$ in the environment presented in Sec.\ref{sec:env} (the result also applies to the environment considered in \cite{higgins2018towards}). The key element of the proof is that the two actual dimensions of the environment are not linear but cyclic. Hence the impossibility of modelling two cyclic dimensions using two linear dimensions. See Appendix \ref{app:proofs2} for full proof of the result.

Based on this, we show next how to learn, in practice, a SB-disentangled representation of dimension $2$ and a LSB-disentangled representation of dimension $4$.

\section{Symmetry-Based Disentangled Representation Learning in practice}

\begin{figure}[!h]
\centering
\includegraphics[scale=.3]{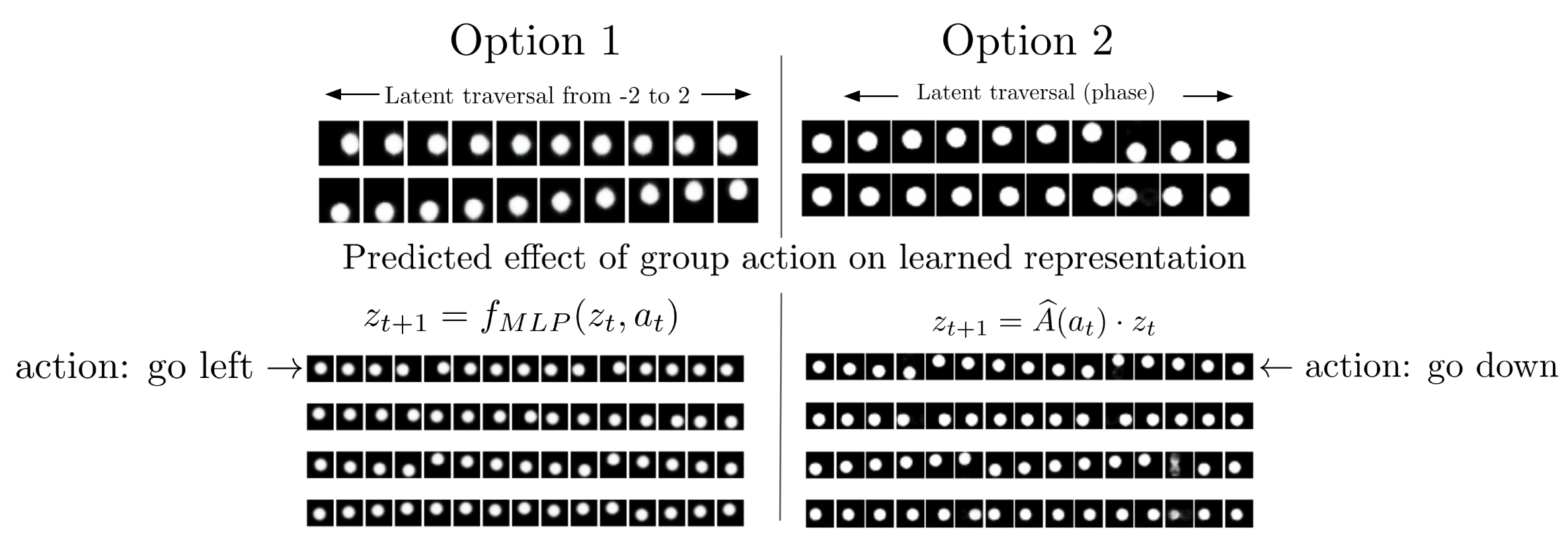}
\caption{\textbf{Left:} First option: decoupled learning of representation and group action, here applied to learning a non-linear SB-disentangled representation. Latent traversal spanning from -2 to 2 over each of the representation's dimensions, followed by the predicted effect of the group action associated with each action (left, right, down, up). \textbf{Right:} Second option: joint learning of representation and group action, here applied to learning a non-linear LSB-disentangled representation. The representation is complex: latent traversal over the phase of each of the representation's dimensions, followed by the predicted linear effect of the group action associated each action (down, left, up, right).}
\label{fig:results}
\end{figure}

We consider the problem of learning, in practice, SB-disentangled and LSB-disentangled representations for the world considered in Sec.\ref{sec:env}. For that, we propose two approaches: decoupled and end-to-end. 

We illustrate each method by learning a SB-disentangled representation with the decoupled approach, and learning a LSB-disentangled representation with the end-to-end approach.


\subsection{Decoupled approach (illustrated on SB-disentangled representation)}
\label{sec:firstoption}

We propose to learn the representation first, and then the group action of $G$ on $Z$ using a separate model. This way, we have a complete description of the SB-disentangled representation. This approach is effectively decoupling the learning of physics from vision as in \citep{ha2018recurrent}. 

We consider learning a $2$-dimensional SB-disentangled representation. We started by reproducing the results in \citep{higgins2018towards}: we used a variant of current state-of-the-art disentangled representation learning model CCI-VAE. The learned representation corresponds (up to a scaling factor) to the world-state $W$, i.e. the $(x,y)$ position of the agent. This intuitively seems like a reasonable approximation to a disentangled representation. 

However, once the representation is learned, we have no idea how the group action of symmetries affect the representation, even though it is at the core of the definition of SBDRL. This is where the necessity for transitions $(o_t, a_t, o_{t+1})_{t=1..n}$ rather than still observations $(o_t)_{t=1..n}$ comes into play. We learn the group action on Z $\cdot_Z:G\times Z\rightarrow Z$, such that $f=h\circ b$ is an equivariant map between the actions on $W$ and $Z$.

In practice, we learn $h:O\rightarrow Z$ with a variant of CCI-VAE, and then use a multi-layer perceptron to learn the group action on Z. The results are presented in Fig.\ref{fig:results}, where we observe that the learned group action correctly approximates the cyclical movement of the agent. We thus have learned a properly SB-disentangled representation of the world, w.r.t to the group decomposition $G=G_x \times G_y$.

\subsection{End-to-end approach (illustrated on LSB-disentangled representation)}
\label{sec:avae}


In the decoupled approach, the learned representation is identical to a setting where we would have ignored the group action. Hence, a preferable approach would be to jointly learn the representation and the group action. We study such approach on the task of learning a LSB-disentangled representation.

To accomplish this, we start with a theoretically constructed LSB-disentangled representation. It is based on a example given in \cite{higgins2018towards}. The representation is defined as following, using 4 dimensions: 

\begin{itemize}
\item $f : \mathbb{R}^2 \rightarrow \mathbb{C}^2$ is defined as $f(x, y) = (e^{2i\pi x/N} , e^{2i\pi y/N})$
\item $\rho(g): \mathbb{C}^2 \rightarrow \mathbb{C}^2 $ is defined as $\left\{
      \begin{aligned}
        \rho(g_x)(z_x, z_y) &=& (e^{2i\pi n_x/N} z_x, z_y)\\
        \rho(g_y)(z_x, z_y) &=& (z_x, e^{2i\pi n_y/N}z_y)\\
      \end{aligned}
    \right.$
\end{itemize}

In this representation, the $(x,y)$ position is mapped to two complex numbers $(z_x, z_y)$. For each translation (on the x-axis or y-axis), the associated group action on $Z$ is a rotation on a complex plane associated to the specific axis. This representation linearly accounts for the cyclic symmetry present in the environment. 

Using CCI-VAE with 4 dimensions fails to learn this representation: we verified experimentally that only 2 dimensions were actually used when learning (for encoding the $(x,y)$ position), and the two remaining were ignored.

In order to learn the LSB-disentangled representation, we generate a dataset of transitions, and use it to learn the 4-dimensional LSB-disentangled representation with a specific VAE architecture we term Forward-VAE. This architecture allows to jointly learn the representation and the group action on it. Here, we want the group action on $Z$ to be linear, so we enforce linearity in transitions in the representation space. 

We begin by re-writing the complex-valued function $\rho(g) : \mathbb{C}^2 \to \mathbb{C}^2$ as a real-valued function:

\begin{equation}\label{item16}
\begin{aligned}
\rho(g) :
      \left .
      \begin{aligned}
        \mathbb{R}^4 &\to \mathbb{R}^4\\
        v  &\to \rho(g)(v) = A^{*}(g) \cdot v \\
      \end{aligned}
    \right. \\
\end{aligned}
\end{equation}

where $A^{*}(g)$ is a 4x4 block-diagonal matrix, composed of 2x2 rotation matrices.
Let's consider the environment in Sec.\ref{sec:env}. The agent has 4 actions: go left, right, up or down. We associate each action with a corresponding matrix with trainable weights. 

For instance, if $g = g_x \in G_x$ is a translation on the x-axis, the corresponding matrix is $A^{*}(g_x)$ and we associate actions go right/left with corresponding matrices $\hat{A}(a_t)$, where $\cdot$ are trainable parameters:

$A^{*}(g_x) = \begin{bmatrix} 
   \cos(n_x) & -\sin(n_x) & 0 & 0  \\
   \sin(n_x) & \cos(n_x) & 0 & 0  \\
   0 & 0 & 1 & 0 \\
   0 & 0 & 0 & 1 \\ \end{bmatrix}   \text{ and } \hat{A}(g_x) = \begin{bmatrix} 
   \cdot & \cdot & 0 & 0  \\
   \cdot & \cdot & 0 & 0  \\
   0 & 0 & 1 & 0 \\
   0 & 0 & 0 & 1 \\ \end{bmatrix}$.
   
We would like the representation model that we learn to satisfy $\rho(g)(v_t) = \hat{A}(g) \cdot v_t = v_{t+1}$. We thus enforce the representation to satisfy it in our Forward-VAE architecture, as illustrated in Fig.\ref{fig:env-avae}. The training procedure is presented in Algorithm 1 in Appendix \ref{app:algo2} For each image in a batch, we compute $f(o_t)=z_t$ and $f(o_{t+1})=z_{t+1}$ using the encoder part of the VAE. Then we decode $z_t$ with the decoder and compute the reconstruction loss $\mathcal{L}_{reconstruction}$ and annealed KL divergence $\mathcal{L}_{KL}$ as in \citep{caselles2019s}. Then we compute $\hat{A}(a_t)\cdot z_t$ and compute the forward loss, which is the MSE with $z_{t+1}$: $\mathcal{L}_{forward} = (\hat{A}(a_t)\cdot z_t - z_{t+1})^2$. We then backpropagate w.r.t to the full loss function of Forward-VAE: 
\begin{equation}\label{item18}
\begin{aligned}
 \mathcal{L}_{Forward-VAE} = \mathcal{L}_{reconstruction} + \gamma_t \cdot \mathcal{L}_{KL} + \mathcal{L}_{forward}
\end{aligned}
\end{equation}

The results are presented in Fig.\ref{fig:results} and Appendix \ref{app:extra-results}. Forward-VAE correctly learns a representation where the two complex dimensions correspond to the position $(x,y)$ of the agent. Plus, we observe that the learned matrices $(\hat{A}_i)_{i=1..4}$ are very good approximation of the ideal matrices $(A^{*}_i)_{i=1..4}$ defined above, with $n_x\approx \frac{\pi}{3}$. The mean squared difference is very small (order of $10^{-4}$).
 
\subsection{Remarks} 

Note that we could have applied this joint learning approach to learning non-linear SB-disentangled representation. However it is not possible to apply the decoupled approach to learning a LSB-disentangled representation. 

We used inductive bias given by the theoretical construction of a LSB-disentangled representation theory to design the action matrices and its trainable weights. This construction is specific to this example. However, the idea of having an action matrix for each action is extendable. If each action is high-level and associated to a symmetry, then SBDRL can be performed. Still, it requires high level actions that represent these symmetries. One potential way to find these actions is through active search \citep{soatto2011steps}, as suggested in \citep{higgins2018towards}.

In our Forward-VAE architecture we indeed explicitly design the model such that the resulting representation is \textcolor{blue}{Linear }\textcolor{green}{SB}-\textcolor{red}{disentangled}, because we \textcolor{blue}{enforce linearity}, \textcolor{green}{force the representation to be SB (see points 1 and 2 in the definition in Sec.3)} and \textcolor{red}{by design have two separate subspaces for each symmetry}. A more general approach would have been not to have those two separated subspaces and learn the entire action matrices, and thus we won't have the guarantee that the representation will satisfy the \textcolor{red}{disentangled} property. We ran this experiment and obtained the expected result: the learned representation is Linear-SB but not disentangled. This means that the x and y coordinates are not properly disentangled w.r.t to the considered group decomposition (i.e. a latent traversal over each dimension would not result in only a movement of the agent along the x or y coordinate). However the learned actions matrices are able to describe how the symmetries affect the representation and in a linear way. Hence enforcing disentanglement is the only viable option we found for LSB-disentanglement with this architecture.

It is important to note that an instability in Forward-VAE training can be expected due to the different contributions of the loss: at each training steps the goal of the forward part of the loss is to have a latent space that is suited for predicting $z_{t+1}$ using $z_t$. The rest of the loss is the VAE, which tries to learn a latent space that allows reconstruction. Hence the balance between these two seemingly unrelated objectives might be a source of instability. However it worked in practice, without any re-weighting of the objectives, which was a surprise.

 \section{Using (L)SB-disentangled representations for downstream tasks}
\label{sec:downstream}

Is using (L)SB-disentangled representations beneficial for subsequent tasks? This remains to be demonstrated, as other work have already challenged the benefit of learning disentangled representations over non-disentangled ones \citep{locatello2019challenging}. In this section we wish to answer the following question: \textbf{is it increasingly better to use non-disentangled/non-linear SB-disentangled/LSB-disentangled representation for downstream tasks?} We define better in terms of final performance, under different settings (restricted capacity classifiers/restricted amount of data).

For the choice of downstream task, we select the task of learning an inverse model, which consists in predicting the action $a_t$ from two consecutive states $(s_t,s_{t+1})$.

As a LSB-disentangled representation models the interaction with the environment linearly, it intuitively should be increasingly easier to learn an inverse model from: a non-disentangled representation, a non-linear SB-disentangled representation, and a LSB-disentangled representation. 

\subsection{Experimental protocol}

In order to test this hypothesis, we selected a well-established implementation (Scikit-learn \citep{scikit-learn}) of a well-studied classifier (Random Forest \citep{breiman2001random}). We collect 10k transitions $(o_t,a_t,o_{t+1})$. We train the following models and baselines to compare:

\begin{itemize}
    \item LSB-disentangled representation of dimension 4: Forward-VAE trained as in Sec.\ref{sec:avae}.
    \item SB-disentangled representation of dimension 2: CCI-VAE variant trained as in Sec.\ref{sec:firstoption}.
    \item Non-disentangled representation of dimension 2: Auto-encoder, non-disentangled baseline.
    \item SB-disentangled representation of dimension 4: CCI-VAE trained as in Sec.\ref{sec:firstoption} but with 4 dimensions, baseline to control for the effect of the size of the representation.
\end{itemize}

For each model, once trained, we created a dataset of transitions in the corresponding representation space $(s_t, a_t, s_{t+1})$. We then report the 10-fold cross-validation mean accuracy as a function of the maximum depth parameter of random forest, which controls the capacity of the classifier.

\begin{figure}[h!]
    \centering
    \begin{subfigure}[t]{0.49\textwidth}
        \centering
        \includegraphics[height=1.4in]{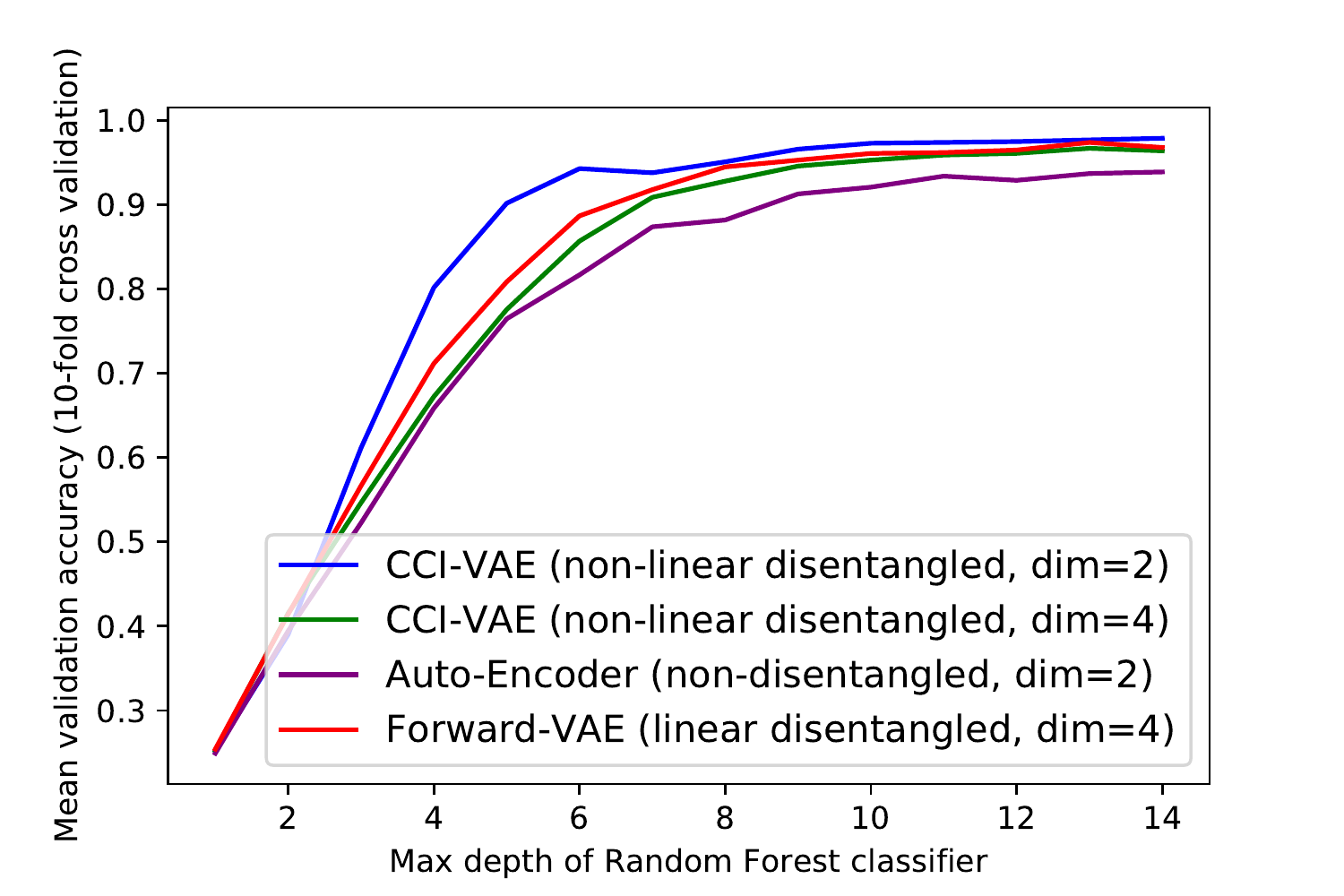}
        \caption{Dataset size: 1k samples}
    \end{subfigure}%
    ~
    \begin{subfigure}[t]{0.49\textwidth}
        \centering
        \includegraphics[height=1.4in]{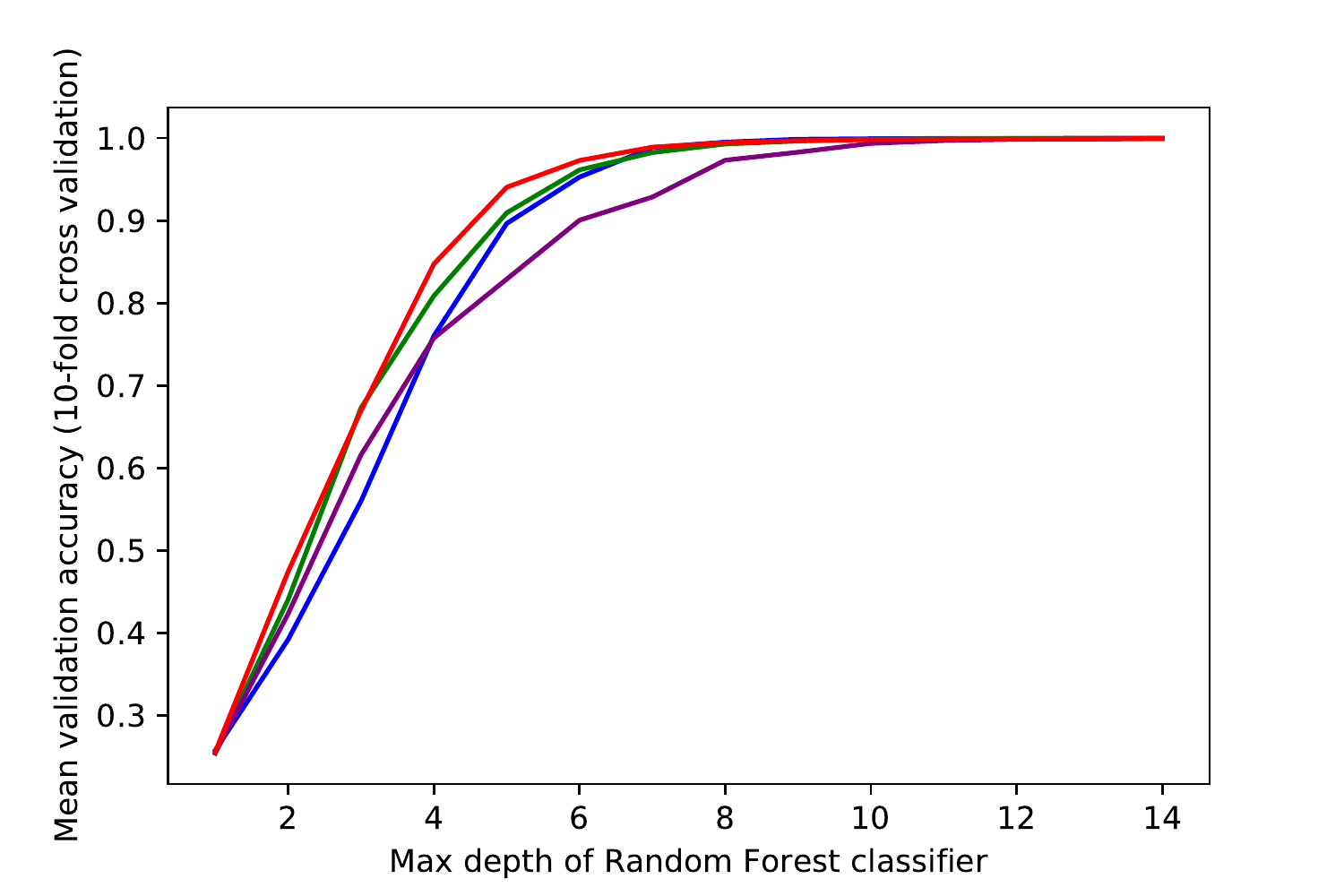}
        \caption{Dataset size: 10k samples}
    \end{subfigure}
    \caption{Downstream task evaluation of representation models: inverse model prediction. Mean 10-fold cross validation accuracy as functions of dataset size and classifier capacity (max depth parameter of Random Forest). LSB and SB-disentangled representation perform best.}
    \label{fig:downstream}
\end{figure}

\subsection{Results}

We first observe that in all cases, either LSB or SB-disentangled representations are performing best. In terms of final performance, all models meet at the upper 100\% accuracy limit, given enough data and a classifier with enough capacity.

However, if we consider a constraint in training set size and a fixed high capacity classifier (see Fig.\ref{fig:downstream}), we can see that using a SB-disentangled representation is superior to other options. We refer to the capacity of the classifier as "high" if increasing the capacity parameter does not lead to an increase in validation accuracy.

Moreover, if we consider a fixed large training set size and a constraint on the classifier's capacity, using LSB-disentangled representation is the best option. 

As a conclusion, we observed that it is easier for a small capacity classifier to solve the task using a LSB-disentangled representation and it is easier to solve the task using less data with a SB-disentangled representation. This indicates that (L)SB-disentanglement is indeed useful for downstream task solving. 

\subsection{Remarks}

It's worth noting that the advantage is not very substantial, which is expected due to the simplicity of the task. Our results on usefulness of (L)SB-disentangled representations for downstream tasks are preliminary, it would be interesting as future work to compare to more baselines and on more tasks. Other related works such as \cite{van2019disentangled, locatello2019fairness} also study the usefulness of disentangled representations for downstream tasks, and respectively find them useful for performance in abstract visual reasoning tasks and for encouraging fairness when sensitive variables are not observed. 

More generally, there is a lack of large-scale evaluations of representations' usefulness for downstream tasks in the disentanglement representation learning literature. Such studies are needed to validate the intuition that disentanglement is useful in practice for subsequent tasks.


\section{Discussion \& Conclusion}

\quad \textbf{Discussion.} The benefit of using transitions rather than still observations for representation learning in the context of an agent acting in a environment has been proposed, discussed and implemented in previous work \citep{thomas2017independently, raffin2019decoupling}. In this work however, we emphasize that using transitions is not only a beneficial option, but is compulsory in the context of the current definition of SBDRL for an agent acting in an environment, as Theorem \ref{thm:2} proves it.


Applying SBDRL to more complex environments is not straightforward. For instance consider that we add an object in the environment studied in this paper. Then the group structure of the symmetries of the world are broken when the agent is close to the object. However, the symmetries are conserved locally. One approach would be to start from this local property to learn an approximate SB-disentangled representation.


\quad \textbf{Conclusion.} Using theoretical and empirical arguments, we demonstrated that SBDRL \citep{higgins2018towards}, a proposed definition for disentanglement in Representation Learning, requires interaction with the environment. We then proposed two methods to perform SBDRL in practice, both of which are successful empirically. We believe SBDRL provides a new perspective on disentanglement which can be promising for Representation Learning in the context of an agent acting in a environment. 


\section*{Acknowledgements}

We thank Irina Higgins for insightful mail discussions.

\bibliography{nips_2018}


\begin{thebibliography}{00}


\ifx \showCODEN    \undefined \def \showCODEN     #1{\unskip}     \fi
\ifx \showDOI      \undefined \def \showDOI       #1{#1}\fi
\ifx \showISBNx    \undefined \def \showISBNx     #1{\unskip}     \fi
\ifx \showISBNxiii \undefined \def \showISBNxiii  #1{\unskip}     \fi
\ifx \showISSN     \undefined \def \showISSN      #1{\unskip}     \fi
\ifx \showLCCN     \undefined \def \showLCCN      #1{\unskip}     \fi
\ifx \shownote     \undefined \def \shownote      #1{#1}          \fi
\ifx \showarticletitle \undefined \def \showarticletitle #1{#1}   \fi
\ifx \showURL      \undefined \def \showURL       {\relax}        \fi
\providecommand\bibfield[2]{#2}
\providecommand\bibinfo[2]{#2}
\providecommand\natexlab[1]{#1}
\providecommand\showeprint[2][]{arXiv:#2}

\bibitem[\protect\citeauthoryear{Breiman}{Breiman}{2001}]%
        {breiman2001random}
\bibfield{author}{\bibinfo{person}{Leo Breiman}.}
  \bibinfo{year}{2001}\natexlab{}.
\newblock \showarticletitle{Random forests}.
\newblock \bibinfo{journal}{{\em Machine learning\/}} \bibinfo{volume}{45},
  \bibinfo{number}{1} (\bibinfo{year}{2001}), \bibinfo{pages}{5--32}.
\newblock


\bibitem[\protect\citeauthoryear{Caselles-Dupr{\'e}, Annabi, Hagen,
  Garcia-Ortiz, and Filliat}{Caselles-Dupr{\'e} et~al\mbox{.}}{2018}]%
        {caselles2018flatland}
\bibfield{author}{\bibinfo{person}{Hugo Caselles-Dupr{\'e}},
  \bibinfo{person}{Louis Annabi}, \bibinfo{person}{Oksana Hagen},
  \bibinfo{person}{Michael Garcia-Ortiz}, {and} \bibinfo{person}{David
  Filliat}.} \bibinfo{year}{2018}\natexlab{}.
\newblock \showarticletitle{Flatland: a Lightweight First-Person 2-D
  Environment for Reinforcement Learning}.
\newblock \bibinfo{journal}{{\em Workshop on Continual Unsupervised
  Sensorimotor Learning, ICDL-EpiRob 2018\/}} (\bibinfo{year}{2018}).
\newblock


\bibitem[\protect\citeauthoryear{Caselles-Dupr{\'e}, Garcia-Ortiz, and
  Filliat}{Caselles-Dupr{\'e} et~al\mbox{.}}{2019}]%
        {caselles2019s}
\bibfield{author}{\bibinfo{person}{Hugo Caselles-Dupr{\'e}},
  \bibinfo{person}{Michael Garcia-Ortiz}, {and} \bibinfo{person}{David
  Filliat}.} \bibinfo{year}{2019}\natexlab{}.
\newblock \showarticletitle{S-TRIGGER: Continual State Representation Learning
  via Self-Triggered Generative Replay}.
\newblock \bibinfo{journal}{{\em arXiv preprint arXiv:1902.09434\/}}
  (\bibinfo{year}{2019}).
\newblock


\bibitem[\protect\citeauthoryear{Ha and Schmidhuber}{Ha and
  Schmidhuber}{2018}]%
        {ha2018recurrent}
\bibfield{author}{\bibinfo{person}{David Ha} {and} \bibinfo{person}{J{\"u}rgen
  Schmidhuber}.} \bibinfo{year}{2018}\natexlab{}.
\newblock \showarticletitle{Recurrent world models facilitate policy
  evolution}. In \bibinfo{booktitle}{{\em Advances in Neural Information
  Processing Systems}}. \bibinfo{pages}{2450--2462}.
\newblock


\bibitem[\protect\citeauthoryear{Higgins, Amos, Pfau, Racaniere, Matthey,
  Rezende, and Lerchner}{Higgins et~al\mbox{.}}{2018}]%
        {higgins2018towards}
\bibfield{author}{\bibinfo{person}{Irina Higgins}, \bibinfo{person}{David
  Amos}, \bibinfo{person}{David Pfau}, \bibinfo{person}{Sebastien Racaniere},
  \bibinfo{person}{Loic Matthey}, \bibinfo{person}{Danilo Rezende}, {and}
  \bibinfo{person}{Alexander Lerchner}.} \bibinfo{year}{2018}\natexlab{}.
\newblock \showarticletitle{Towards a Definition of Disentangled
  Representations}.
\newblock \bibinfo{journal}{{\em arXiv preprint arXiv:1812.02230\/}}
  (\bibinfo{year}{2018}).
\newblock


\bibitem[\protect\citeauthoryear{Higgins, Pal, Rusu, Matthey, Burgess, Pritzel,
  Botvinick, Blundell, and Lerchner}{Higgins et~al\mbox{.}}{2017}]%
        {higgins2017darla}
\bibfield{author}{\bibinfo{person}{Irina Higgins}, \bibinfo{person}{Arka Pal},
  \bibinfo{person}{Andrei Rusu}, \bibinfo{person}{Loic Matthey},
  \bibinfo{person}{Christopher Burgess}, \bibinfo{person}{Alexander Pritzel},
  \bibinfo{person}{Matthew Botvinick}, \bibinfo{person}{Charles Blundell},
  {and} \bibinfo{person}{Alexander Lerchner}.} \bibinfo{year}{2017}\natexlab{}.
\newblock \showarticletitle{Darla: Improving zero-shot transfer in
  reinforcement learning}. In \bibinfo{booktitle}{{\em Proceedings of the 34th
  International Conference on Machine Learning-Volume 70}}. JMLR. org,
  \bibinfo{pages}{1480--1490}.
\newblock


\bibitem[\protect\citeauthoryear{Kingma and Ba}{Kingma and Ba}{2014}]%
        {kingma2014adam}
\bibfield{author}{\bibinfo{person}{Diederik~P Kingma} {and}
  \bibinfo{person}{Jimmy Ba}.} \bibinfo{year}{2014}\natexlab{}.
\newblock \showarticletitle{Adam: A method for stochastic optimization}.
\newblock \bibinfo{journal}{{\em 3rd International Conference for Learning
  Representations\/}} (\bibinfo{year}{2014}).
\newblock


\bibitem[\protect\citeauthoryear{Locatello, Abbati, Rainforth, Bauer,
  Sch{\"o}lkopf, and Bachem}{Locatello et~al\mbox{.}}{2019a}]%
        {locatello2019fairness}
\bibfield{author}{\bibinfo{person}{Francesco Locatello},
  \bibinfo{person}{Gabriele Abbati}, \bibinfo{person}{Tom Rainforth},
  \bibinfo{person}{Stefan Bauer}, \bibinfo{person}{Bernhard Sch{\"o}lkopf},
  {and} \bibinfo{person}{Olivier Bachem}.} \bibinfo{year}{2019}\natexlab{a}.
\newblock \showarticletitle{On the Fairness of Disentangled Representations}.
\newblock \bibinfo{journal}{{\em NeurIPS 2019\/}} (\bibinfo{year}{2019}).
\newblock


\bibitem[\protect\citeauthoryear{Locatello, Bauer, Lucic, Raetsch, Gelly,
  Sch{\"o}lkopf, and Bachem}{Locatello et~al\mbox{.}}{2019b}]%
        {locatello2019challenging}
\bibfield{author}{\bibinfo{person}{Francesco Locatello},
  \bibinfo{person}{Stefan Bauer}, \bibinfo{person}{Mario Lucic},
  \bibinfo{person}{Gunnar Raetsch}, \bibinfo{person}{Sylvain Gelly},
  \bibinfo{person}{Bernhard Sch{\"o}lkopf}, {and} \bibinfo{person}{Olivier
  Bachem}.} \bibinfo{year}{2019}\natexlab{b}.
\newblock \showarticletitle{Challenging Common Assumptions in the Unsupervised
  Learning of Disentangled Representations}. In \bibinfo{booktitle}{{\em
  International Conference on Machine Learning}}. \bibinfo{pages}{4114--4124}.
\newblock


\bibitem[\protect\citeauthoryear{Paszke, Gross, Chintala, Chanan, Yang, DeVito,
  Lin, Desmaison, Antiga, and Lerer}{Paszke et~al\mbox{.}}{2017}]%
        {paszke2017automatic}
\bibfield{author}{\bibinfo{person}{Adam Paszke}, \bibinfo{person}{Sam Gross},
  \bibinfo{person}{Soumith Chintala}, \bibinfo{person}{Gregory Chanan},
  \bibinfo{person}{Edward Yang}, \bibinfo{person}{Zachary DeVito},
  \bibinfo{person}{Zeming Lin}, \bibinfo{person}{Alban Desmaison},
  \bibinfo{person}{Luca Antiga}, {and} \bibinfo{person}{Adam Lerer}.}
  \bibinfo{year}{2017}\natexlab{}.
\newblock \showarticletitle{Automatic differentiation in pytorch}.
\newblock  (\bibinfo{year}{2017}).
\newblock


\bibitem[\protect\citeauthoryear{Pedregosa, Varoquaux, Gramfort, Michel,
  Thirion, Grisel, Blondel, Prettenhofer, Weiss, Dubourg, Vanderplas, Passos,
  Cournapeau, Brucher, Perrot, and Duchesnay}{Pedregosa et~al\mbox{.}}{2011}]%
        {scikit-learn}
\bibfield{author}{\bibinfo{person}{F. Pedregosa}, \bibinfo{person}{G.
  Varoquaux}, \bibinfo{person}{A. Gramfort}, \bibinfo{person}{V. Michel},
  \bibinfo{person}{B. Thirion}, \bibinfo{person}{O. Grisel},
  \bibinfo{person}{M. Blondel}, \bibinfo{person}{P. Prettenhofer},
  \bibinfo{person}{R. Weiss}, \bibinfo{person}{V. Dubourg}, \bibinfo{person}{J.
  Vanderplas}, \bibinfo{person}{A. Passos}, \bibinfo{person}{D. Cournapeau},
  \bibinfo{person}{M. Brucher}, \bibinfo{person}{M. Perrot}, {and}
  \bibinfo{person}{E. Duchesnay}.} \bibinfo{year}{2011}\natexlab{}.
\newblock \showarticletitle{Scikit-learn: Machine Learning in {P}ython}.
\newblock \bibinfo{journal}{{\em Journal of Machine Learning Research\/}}
  \bibinfo{volume}{12} (\bibinfo{year}{2011}), \bibinfo{pages}{2825--2830}.
\newblock


\bibitem[\protect\citeauthoryear{Raffin, Hill, Traor{\'e}, Lesort,
  D{\'\i}az-Rodr{\'\i}guez, and Filliat}{Raffin et~al\mbox{.}}{2019}]%
        {raffin2019decoupling}
\bibfield{author}{\bibinfo{person}{Antonin Raffin}, \bibinfo{person}{Ashley
  Hill}, \bibinfo{person}{Kalifou~Ren{\'e} Traor{\'e}},
  \bibinfo{person}{Timoth{\'e}e Lesort}, \bibinfo{person}{Natalia
  D{\'\i}az-Rodr{\'\i}guez}, {and} \bibinfo{person}{David Filliat}.}
  \bibinfo{year}{2019}\natexlab{}.
\newblock \showarticletitle{Decoupling feature extraction from policy learning:
  assessing benefits of state representation learning in goal based robotics}.
\newblock \bibinfo{journal}{{\em ICLR 2019 Workshop on Structure \& Priors in
  Reinforcement Learning (SPiRL)\/}} (\bibinfo{year}{2019}).
\newblock


\bibitem[\protect\citeauthoryear{Soatto}{Soatto}{2011}]%
        {soatto2011steps}
\bibfield{author}{\bibinfo{person}{Stefano Soatto}.}
  \bibinfo{year}{2011}\natexlab{}.
\newblock \bibinfo{booktitle}{{\em Steps Towards a Theory of Visual
  Information: Active Perception, Signal-to-Symbol Conversion and the Interplay
  Between Sensing and Control}}.
\newblock \bibinfo{type}{{T}echnical {R}eport}.
\newblock


\bibitem[\protect\citeauthoryear{Thomas, Pondard, Bengio, Sarfati, Beaudoin,
  Meurs, Pineau, Precup, and Bengio}{Thomas et~al\mbox{.}}{2017}]%
        {thomas2017independently}
\bibfield{author}{\bibinfo{person}{Valentin Thomas}, \bibinfo{person}{Jules
  Pondard}, \bibinfo{person}{Emmanuel Bengio}, \bibinfo{person}{Marc Sarfati},
  \bibinfo{person}{Philippe Beaudoin}, \bibinfo{person}{Marie-Jean Meurs},
  \bibinfo{person}{Joelle Pineau}, \bibinfo{person}{Doina Precup}, {and}
  \bibinfo{person}{Yoshua Bengio}.} \bibinfo{year}{2017}\natexlab{}.
\newblock \showarticletitle{Independently Controllable Features}.
\newblock \bibinfo{journal}{{\em arXiv preprint arXiv:1708.01289\/}}
  (\bibinfo{year}{2017}).
\newblock


\bibitem[\protect\citeauthoryear{van Steenkiste, Locatello, Schmidhuber, and
  Bachem}{van Steenkiste et~al\mbox{.}}{2019}]%
        {van2019disentangled}
\bibfield{author}{\bibinfo{person}{Sjoerd van Steenkiste},
  \bibinfo{person}{Francesco Locatello}, \bibinfo{person}{J{\"u}rgen
  Schmidhuber}, {and} \bibinfo{person}{Olivier Bachem}.}
  \bibinfo{year}{2019}\natexlab{}.
\newblock \showarticletitle{Are Disentangled Representations Helpful for
  Abstract Visual Reasoning?}
\newblock \bibinfo{journal}{{\em NeurIPS 2019\/}} (\bibinfo{year}{2019}).
\newblock


\end{thebibliography}
\bibliographystyle{ACM-Reference-Format}

\newpage


\appendix

\section{Proofs}
\label{app:proofs}

\subsection{Symmetry-Based Disentangled Representation Learning requires interaction with environments}
\label{app:proofs1}

We prove Theorem \ref{thm:2}.

\begin{theorem*}

Suppose we have a SB representation $(f,\cdot_Z)$ of a world $\mathcal{W}_0=(W=(w_1, .., w_m)\in{\mathbb{R}^{m\times d}}, \cdot_{\mathcal{W}_0})$ w.r.t to $G = G_1 \times ... \times G_n$ using a training set $\mathcal{T}$ of unordered observations of $\mathcal{W}_0$. Let $W_k$ be the set of possible values for the $k^{th}$ dimension of $w\in W$. \\ Then:

\begin{enumerate}
    \item There exists at least $k_{W,G} = n [(\min_{k}(card(W_k))!] - 1$ worlds $(\mathcal{W}_1, .., \mathcal{W}_{k_{W,G}})$ equipped with the same world states $\mathcal{W}_i=(w_1, .., w_m)$ and symmetries $G$, but different group actions $\cdot_{\mathcal{W}_i}$.
    \item For these worlds, $(f,\cdot_Z)$ is not a SB representation.
    \item These worlds can produce exactly the same training set $\mathcal{T}$ of still images.
\end{enumerate}

\end{theorem*}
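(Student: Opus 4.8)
The plan is to establish the three claims in the order 3, 2, 1, since the combinatorial count in the first item is cleanest once the mechanism behind the other two is in place.

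\textbf{Part 3 (shared training set).} Observations are produced by the generative process $b : W \to O$, which depends only on the world state and not on the group action. Since every world $\mathcal{W}_i$ shares the same world states $W = (w_1, \dots, w_m)$, they share the same set of realizable observations $b(W)$. An unordered training set $\mathcal{T}$ is, by definition, merely a collection of such observations with no transition information attached, so the same $\mathcal{T}$ is emitted by every $\mathcal{W}_i$. This is immediate.

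\textbf{Part 2 ($(f,\cdot_Z)$ fails to be SB).} Here I would exploit equivariance. Being SB for $\mathcal{W}_0$ means $f(g \cdot_{\mathcal{W}_0} w) = g \cdot_Z f(w)$ for all $g \in G$, $w \in W$. Suppose for contradiction that the same pair $(f,\cdot_Z)$ were also SB for some $\mathcal{W}_i$ with $\cdot_{\mathcal{W}_i} \neq \cdot_{\mathcal{W}_0}$; then $f(g \cdot_{\mathcal{W}_i} w) = g \cdot_Z f(w) = f(g \cdot_{\mathcal{W}_0} w)$ for all $g,w$. Choosing $g,w$ at which the two actions disagree, i.e. $g \cdot_{\mathcal{W}_i} w \neq g \cdot_{\mathcal{W}_0} w$, exhibits two distinct world states sent by $f$ to the same representation. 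Assuming $f = h\circ b$ separates world states, as any faithful representation should, this is a contradiction: $\cdot_Z$ is pinned down by $\cdot_{\mathcal{W}_0}$ and $f$, so no single $(f,\cdot_Z)$ can be equivariant for two genuinely different world actions.

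\textbf{Part 1 (counting the alternatives).} This is the step I expect to be the \textbf{main obstacle}, since it demands enumerating admissible actions rather than merely arguing existence. The idea is constructive: with $M = \min_k(card(W_k))$ the size of the coarsest coordinate, an action of a factor $G_i$ on that coordinate amounts to a choice of how its generator permutes the $M$ values, of which there are on the order of $M!$. Varying one factor at a time through these relabellings, while keeping the other factors as in $\mathcal{W}_0$, produces pairwise-distinct world actions; tallying across the $n$ factors and discarding the single copy of $\mathcal{W}_0$ yields $k_{W,G} = n\,[\,(\min_k card(W_k))!\,] - 1$. By Part 2 none of these admits $(f,\cdot_Z)$ as a SB representation, and by Part 3 all emit the same $\mathcal{T}$. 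The delicate point is that not every permutation of the $M$ values extends to a legitimate action of $G_i$ — an element of order $r$ must act by a permutation of order dividing $r$ — so $M!$ is really an over-count of single-generator relabellings and must be justified as a valid lower bound rather than an exact enumeration. I would therefore make the construction explicit for the cyclic factors relevant to the environment of Sec.\ref{sec:env}, where the $M$ values of a coordinate are genuinely cyclically ordered and distinct cyclic relabellings give distinct admissible $\cdot_{\mathcal{W}_i}$ (the \emph{(red, green, blue)} versus \emph{(red, blue, green)} phenomenon), and treat the general bound as following from this model case.
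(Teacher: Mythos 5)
Your proof follows essentially the same route as the paper's own: part 3 from the shared state space and the state-only generative process $b$, part 2 from equivariance evaluated at a pair $(g,w)$ where the two actions disagree, and part 1 by permuting the $\min_k(card(W_k))$ values along each axis and tallying over the $n$ factors. The two caveats you flag are precisely the points where the paper's proof is silent --- it writes ``necessarily $f(g\cdot_{\mathcal{W}_i}w) \neq f(g\cdot_{\mathcal{W}_0}w)$'', which likewise presupposes that $f$ separates world states, and it asserts the $\min_k(card(W_k))!$ permutation count without checking that every permutation yields a legitimate action of $G_i$ --- so your version matches the paper's argument while being more explicit about its unstated assumptions.
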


\begin{proof}

We prove the three points. 

\quad 1. 
For each symmetry $G_i$, we can shuffle the order of states along each axis of $W$. For instance, if the symmetry is translation along a cyclic hue axis composed of three colors (red, green, blue). Then one can consider two worlds where translating right from red moves the agent in blue (world 1) or green (world 2).

We provide a lower bound to the number of possible worlds. For a symmetry $G_i$, the minimal number of possible visited states is $\min_k(card(W_k))$. It is the case if all symmetries affect only one axis of $W$ and all axis of $W$ have the same number of possible values ($=\min_k(card(W_k))$. The number of possible world is then given by the number of permutations of a set composed of $\min_k(card(W_k))$ elements, which is $\min_k(card(W_k))!$.

There are $n$ symmetries in $G=G_1 \times .. \times G_n$, hence there are at least $k_{W,G} = n [(\min_{k}(card(W_k))!] - 1$ possible worlds $(\mathcal{W}_1, .., \mathcal{W}_{k_{W,G}})$ that are not $\mathcal{W}_0$ but share the same state space $W$ and symmetries $G$. They differ by the action $\cdot_{\mathcal{W}_i}$ of $G$ on the world $\mathcal{W}_i$. 

\quad 2. For any different world $\mathcal{W}_i$ than $\mathcal{W}_0$, there exists a state and a symmetry $(g,w\in G\times W)$ such that the action of $g$ on $w$ is not the same on the two worlds. Thus, $f$ is not equivariant between the group actions on $W$ and $Z$ w.r.t to both $\mathcal{W}_0$ and $\mathcal{W}_i$. Hence $(f,\cdot_Z)$ is necessarily not a SB representation w.r.t to any of the worlds $(\mathcal{W}_1, .., \mathcal{W}_{k_{W,G}})$ and $G$. 

Formally, let $i\in [|1..k_{W,G}|]$. $\mathcal{W}_i \neq \mathcal{W}_0 \implies \exists (g,w \in G\times W), \ g\cdot_{\mathcal{W}_i}w \neq g\cdot_{\mathcal{W}_0}w$. Necessarily, $f(g\cdot_{\mathcal{W}_i}w) \neq f(g\cdot_{\mathcal{W}_0}w)$. Yet, $(f,\cdot_Z)$ is SB w.r.t $\mathcal{W}_0$: $f(g\cdot_{\mathcal{W}_0}w) = g\cdot_Zf(w)$. Hence, $f(g\cdot_{\mathcal{W}_i}w) \neq g\cdot_Zf(w)$, i.e. for world $\mathcal{W}_i$, $(f,\cdot_Z)$ is not equivariant between the group actions on $W$ and $Z$.

\quad 3. $(\mathcal{W}_0, .., \mathcal{W}_{k_{W,G}})$ all share the same state space. Hence they can theoretically produce any training set of still images collected in $\mathcal{W}_0$. 

\end{proof}
\subsection{Trivial representations}
\label{app:proofs2}

We first define trivial representations and then prove that they are LSB-disentangled. We will then use this definition to prove Theorem \ref{thm:1}.

\begin{defn}
$Z$ is a trivial representation if and only if $f$ is constant.
\end{defn}

If $Z$ is a trivial representation, we thus have that each state of the world $w\in W$ has the same representation.

\begin{prop}
\label{prop:1}
If $Z$ is a trivial representation then $Z$ is LSB-disentangled w.r.t to every group decomposition.  
\end{prop}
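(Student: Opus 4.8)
The plan is to equip $Z$ with the \emph{trivial} group action and then verify, one condition at a time, the requirements defining an LSB-disentangled representation. Since $f$ is constant, write $f(w) = z_0$ for all $w \in W$, and define $\cdot_Z : G \times Z \to Z$ by $g \cdot_Z z = z$ for every $g \in G$ and $z \in Z$. First I would check that this is a genuine group action: the identity $e \in G$ satisfies $e \cdot_Z z = z$, and $(g g') \cdot_Z z = z = g \cdot_Z (g' \cdot_Z z)$, so both axioms hold. This settles the existence of a group action on $Z$.

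Next I would establish equivariance. Because $f$ is constant, for any $g \in G$ and $w \in W$ we have $f(g \cdot_{\mathcal{W}} w) = z_0$ on one side and $g \cdot_Z f(w) = g \cdot_Z z_0 = z_0$ on the other, so the equivariance square commutes no matter how $G$ acts on $W$. Together with the previous step, this shows that $(f, \cdot_Z)$ is a Symmetry-Based representation, independently of the world action $\cdot_{\mathcal{W}}$.

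For the disentanglement property, I would fix an arbitrary decomposition $G = G_1 \times \dots \times G_n$ and choose any product decomposition $Z = Z_1 \times \dots \times Z_n$ into linear subspaces (allowing zero-dimensional factors if $Z$ is too small to split into $n$ nontrivial pieces). Under the trivial action each $Z_i$ is fixed by \emph{every} $G_j$; in particular it is fixed by all $G_j$ with $j \neq i$, which is exactly the condition required, and the action of $G_i$ on $Z_i$ is the identity map. Linearity is then immediate, since the identity on each $Z_i$ is a linear map, so every $G_i$ acts on its corresponding subspace linearly, yielding the ``linear'' refinement and hence LSB-disentanglement.

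The only delicate point, which I would state explicitly, is the reading of ``affected only by $G_i$'' in the disentanglement condition: the trivial action leaves $Z_i$ fixed even under $G_i$, so $Z_i$ is not genuinely moved by any subgroup. I would argue this still meets the definition, since the condition is a constraint forbidding $Z_i$ from being moved by $G_j$ for $j \neq i$, rather than a requirement that $G_i$ act nontrivially; the trivial action satisfies it vacuously. This is the expected main obstacle, but it is a matter of interpreting the definition rather than a calculation, so once resolved the proof is complete.
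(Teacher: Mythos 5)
Your proof is correct and takes essentially the same route as the paper's: both equip $Z$ with the trivial (identity) action, which is linear, derive equivariance immediately from the constancy of $f$, and observe that every subspace in any decomposition of $Z$ is fixed by all subgroups, so the disentanglement condition holds vacuously. Your explicit verification of the group-action axioms and your remark on reading ``affected only by $G_i$'' as a constraint rather than a nontriviality requirement are just more careful statements of what the paper leaves implicit.
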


We prove Proposition \ref{prop:1} which states that trivial representations are LSB-disentangled. 

\begin{proof}

The definition of LSB-disentangled representation of dimension 2 is: 

\begin{enumerate}
    \item There is a linear action $\cdot_Z : G \times Z \rightarrow Z$. It thus can be viewed as a group representation $\rho : G \rightarrow GL(Z)$.
    \item The map $f : W \rightarrow Z$ is equivariant between the actions on $W$ and $Z$.
    \item There is a decomposition $Z = Z_1 \times Z_2$ or $Z = Z_1 \bigoplus Z_2$ such that each $Z_i$ is fixed by the action of all $G_j , j \neq i$ and affected only by $G_i$.
\end{enumerate}

Let $\rho(g)$ be the identity function $\forall g \in G$, which is linear. 

We have that $f : W \rightarrow Z$ is constant. We can verify that $f$ is equivariant between the actions on $W$ and $Z$:

\begin{equation}
    \rho(g)(f(w)) = f(w) = f(g\cdot_W w)
\end{equation}

Finally, $Z$ has the same representation $\forall w \in W$, so $Z$ is fixed by the action of any subgroup of $G$. Hence for all decomposition of $G$, point 3. of the definition is satisfied.

\end{proof}

\subsection{It is impossible to learn a LSB-disentangled representation of dimension 2 in the considered environment}
\label{app:proofs3}

We prove Theorem \ref{thm:1} which states that it is impossible to learn a LSB-disentangled representation of dimension $2$ in the environment presented in Sec.\ref{sec:env} (the result also applies to the environment considered in \cite{higgins2018towards}).

\begin{theorem}
\label{thm:1}
For the considered world, there exists no LSB-disentangled representation $Z$ w.r.t to the group decomposition $G=G_x \times G_y $, such that $\dim(Z)=2$ and $Z$ is not trivial.
\end{theorem}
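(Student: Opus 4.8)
The plan is to show that the disentanglement and dimension constraints force the group action on $Z$ to be trivial, and then to use equivariance to conclude that $f$ itself is constant, so that $Z$ is trivial. First I would unpack the definition of a $2$-dimensional LSB-disentangled representation recalled above: the linear action is a representation $\rho : G \to GL(Z)$ with $\dim Z = 2$, and point $3$ gives a splitting $Z = Z_1 \oplus Z_2$ on which $G_x$ acts only through $Z_1$ and $G_y$ only through $Z_2$. Taking the decomposition to genuinely match the two factors of $G = G_x \times G_y$ forces $\dim Z_1 = \dim Z_2 = 1$, so that $\rho$ restricted to $G_x$ is a one-dimensional real representation acting on $Z_1 \cong \mathbb{R}$ while fixing $Z_2$, and symmetrically for $G_y$. (The degenerate splittings with $\dim Z_i \in \{0,2\}$ correspond to a representation that ignores one of the two axes, and I would discard them as not disentangling both symmetries.)

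The key observation, and the heart of the proof, is the nature of $G_x$ and $G_y$. Because each axis of the world is cyclic, the corresponding symmetry group is the circle group: $G_x \cong G_y \cong SO(2)$, a compact and connected group. This is exactly the informal ``the dimensions are cyclic, not linear'' point. I would then invoke the elementary representation-theoretic fact that every continuous one-dimensional real representation of $SO(2)$ is trivial. A short self-contained argument: each $\rho(g) \in GL(1,\mathbb{R}) = \mathbb{R}^{*}$ is a square, since $\rho(r_\theta) = \rho(r_{\theta/2})^2 > 0$, so the image lies in $\mathbb{R}_{>0}$; composing with $\log$ then yields a continuous homomorphism from the compact group $SO(2)$ into $(\mathbb{R},+)$, whose image is a compact subgroup of $\mathbb{R}$ and hence $\{0\}$. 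Therefore $\rho|_{G_x} \equiv 1$ on $Z_1$, and likewise $\rho|_{G_y} \equiv 1$ on $Z_2$; combined with the cross-fixing from point $3$, the whole group acts trivially, i.e. $\rho(g) = \mathrm{id}_Z$ for every $g \in G$.

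Finally I would feed triviality of the action back into equivariance. For all $g \in G$ and $w \in W$, equivariance gives $f(g \cdot_{\mathcal{W}} w) = \rho(g) f(w) = f(w)$; equivalently, $f_1$ is invariant under every $x$-translation and fixed along $y$, and $f_2$ under every $y$-translation and fixed along $x$. Since the translation group $G = G_x \times G_y$ acts transitively on the world-states, any two states are related by some $g$, so $f$ takes the same value everywhere and is constant, i.e. $Z$ is trivial by definition, contradicting the hypothesis that $Z$ is non-trivial. Hence no non-trivial $2$-dimensional LSB-disentangled representation exists. I expect the main obstacle to be not the closing equivariance step but the set-up: pinning down that the cyclic axes yield the compact group $SO(2)$ rather than a finite cyclic group $\mathbb{Z}_N$ (for which a nontrivial sign representation on a line would exist when $N$ is even, breaking the result), and justifying that the disentangled splitting must be into two genuine one-dimensional subspaces. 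Once those are fixed, the triviality of one-dimensional real $SO(2)$-representations does all the work.
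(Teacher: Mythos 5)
You take a genuinely different route from the paper. The paper's proof is an elementary computation: using the disentangled splitting it writes $f=(f_1,f_2)$ and $\rho=\rho_x\oplus\rho_y$ (the same non-degeneracy choice $\dim Z_1=\dim Z_2=1$ that you make explicitly), reduces to the one-dimensional functional equation $f_1(g_x(x),g_y(y))=\rho_x(g_x)(f_1(x,y))$, writes $\rho_x(g_x)$ as an affine map $u\mapsto au+b$, composes it $2N$ times using the cyclicity $g_x(x)=(x+n_x)\bmod N$, obtains $(a^{2N}-1)\,f_1+c=0$, and concludes that either $f_1$ is constant or $a=1,\ b=0$, whence $f$ is constant. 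Your proof replaces this computation by structure: model each cyclic axis by the circle group, invoke the fact that every \emph{continuous} one-dimensional real representation of the compact connected group $SO(2)$ is trivial, and close by transitivity of the translation action. Where the two settings coincide, your argument is cleaner and explains \emph{why} the result holds (compactness of the symmetry group versus non-compactness of $\mathbb{R}^{*}$), at the price of two extra hypotheses: that the symmetry group really is $SO(2)$ rather than $\mathbb{Z}_N$, and that $\rho$ is continuous (without continuity, wild homomorphisms $SO(2)\to\mathbb{R}_{>0}$ exist and the statement again fails).

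The obstacle you flag at the end is not a side issue, and you should not treat it as a mere set-up inconvenience: it is a genuine counterexample regime, and it exposes an error in the paper's own proof. The paper's world is a discrete grid (integer coordinates, $g_x(x)=(x+n_x)\bmod N$), so its symmetry group is $\mathbb{Z}_N\times\mathbb{Z}_N$, exactly the case your $SO(2)$ argument does not cover. There, the paper's step ``$(a^{2N}-1)=0\implies a=1$'' is false over $\mathbb{R}$: since the exponent $2N$ is even, $a=-1$ also survives, and for $a=-1$ the constant $c=b\sum_{i=0}^{2N-1}a^{i}$ vanishes automatically, so the subsequent equation excludes nothing. Precisely as you predict, for even $N$ this loophole is realized by the sign representation: $f(x,y)=((-1)^x,(-1)^y)$ with $\rho(g)=\mathrm{diag}((-1)^{n_x},(-1)^{n_y})$ is a linear action, $f$ is equivariant, the two coordinate lines are fixed by the opposite factor, and $f$ is not constant --- a non-trivial $2$-dimensional LSB-disentangled representation, contradicting Theorem \ref{thm:1} as stated. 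So the theorem requires an additional hypothesis: either your setting (continuous circle symmetry with continuous $\rho$), or $N$ odd (in which case the paper's computation can be repaired by composing $N$ rather than $2N$ times, forcing $a^{N}=1\implies a=1$), or a strengthened notion of non-triviality (e.g.\ injectivity of $f$). Under the first of these, your proof is complete and correct; the paper's proof, as written, is not.
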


\begin{proof} Proof by contradiction. \\ Suppose that there exists a LSB-disentangled representation $Z$ w.r.t to the group decomposition $G=G_x \times G_y$, such that $\dim(Z)=2$. Then, by definition:

\begin{enumerate}
    \item There is a linear action $\cdot_Z : G \times Z \rightarrow Z$. It thus can be viewed as a group representation $\rho : G \rightarrow GL(Z)$.
    \item The map $f : W \rightarrow Z$ is equivariant between the actions on $W$ and $Z$.
    \item There is a decomposition $Z = Z_1 \times Z_2$ or $Z = Z_1 \bigoplus Z_2$ such that each $Z_i$ is fixed by the action of all $G_j , j \neq i$ and affected only by $G_i$.
\end{enumerate}
We now prove that if these conditions are verified, $f$ is necessarily constant. Consequently, $Z$ has the same representation for each state of the world, which is a trivial representation. So, if $Z$ a LSB-disentangled representation of dimension $2$ w.r.t to $G$, then $Z$ is the trivial representation.

We thus suppose that there exists a LSB-disentangled representation $Z$ of dimension $2$ w.r.t to the group decomposition $G=G_x \times G_y$. Hence, we have, by point 2. of the definition:

\begin{equation}\label{item1}
\begin{aligned}
    g\cdot_Z f(w) &= f(g \cdot_W w)  \\
\end{aligned}
\end{equation}

Since $\cdot_Z$ is linear, we can view it as a group representation $\rho$, as mentioned in point 1. of the definition: 

\begin{equation}\label{item2}
\begin{aligned}
    g\cdot_Z f(w) &= \rho (g)(f(w)) &\\
\end{aligned}
\end{equation}

Because $f(W)\in Z \subset \mathbb{R}^2$ and $W=W_x \bigoplus W_y = (x,y)$, we can re-write $f$ as:

\begin{equation}\label{item4}
\begin{aligned}
    f(w) &= f((x,y))\\
    &= (f_1(x,y), f_2(x,y))\\
\end{aligned}
\end{equation}

Hence, combining \eqref{item1} and \eqref{item2}:

\begin{equation}\label{item3}
\begin{aligned}
    f(g \cdot_W (x,y)) &= \rho (g)((f_1(x,y), f_2(x,y))) &\\
\end{aligned}
\end{equation}

We can decompose any $g\in G$ into the composition of functions of each subgroup of $G$, i.e. $\forall g \in G=G_x \times G_y, \exists(g_x,g_y)\in G_x \times G_y$ such that $g=g_x\circ g_y$. Plus, by definition of $Z$ and because $W=W_x \bigoplus W_y = (x,y)$, the action of all $G_i$ on $W$ and $Z$ is fixed by the action of all $G_j , j \neq i$ and affected only by $G_i$. We can thus re-write both terms of Equation \eqref{item3}.

\begin{equation}\label{item4}
\begin{aligned}
f(g \cdot_W (x,y)) &= (f_1((g_x(x),g_y(y))), f_2((g_x(x),g_y(y))) && \text{since $g\cdot_W (x,y) = (g_x(x),g_y(y))$}\\
\end{aligned}
\end{equation}

\begin{equation}\label{item5}
\begin{aligned}
    \rho (g)((f_1(x,y), f_2(x,y))) &= (\rho_x(g_x)(f_1(x,y)),\rho_y(g_y)(f_2(x,y))) && \text{by definition of $\rho$}\\
\end{aligned}
\end{equation}

Hence, Equation \ref{item3} becomes:

\begin{equation}\label{item6}
\begin{aligned}
    (f_1((g_x(x),g_y(y))), f_2((g_x(x),g_y(y))) &= (\rho_x(g_x)(f_1(x,y)),\rho_y(g_y)(f_2(x,y))) &\\
\end{aligned}
\end{equation}

We will now prove that $f_1$ is necessarily constant. The same argument applies for $f_2$.

From Equation \eqref{item6}, we have:

\begin{equation}\label{item7}
\begin{aligned}
    f_1((g_x(x),g_y(y))) &= \rho_x(g_x)(f_1(x,y)) &\\
\end{aligned}
\end{equation}

$g_x$ and $g_y$ are respectively translations on the $x$-axis and $y$-axis. Let $N$ be the size of the grid, then $\exists (n_x,n_y) \in \begin{bmatrix}| 0,N |\end{bmatrix}$ s.t. $(g_x(x), g_y(y)) = ((x+n_x)\mod N, (y+n_y)\mod N)$. When at edge of the world, if the object translates to the right, it returns to the left, hence the modulo operation that represents this cycle. Hence:

\begin{equation}\label{item8}
\begin{aligned}
    f_1((g_x(x),g_y(y))) &= f_1(((x+n_x)\mod N, (y+n_y)\mod N)) &\\
    &= \rho_x(g_x)(f_1(x,y)) \\
\end{aligned}
\end{equation}

The key argument of the proof lies in the fact that $\rho_x(g_x)$ is necessary cyclic of order $2N$ (the minimal order can be inferior to $N$, but it is not useful to caracterize the minimal order in this proof). Let's compose $\rho_x(g_x)$ $2N$ times:

\begin{equation}\label{item10}
\begin{aligned}
    \rho_x(g_x)^{(2N)}(f_1(x,y)) &= f_1(((x+2N\cdot n_x)\mod N, (y+2N\cdot n_y)\mod N)) &\\
    &= f_1((x , y)) &\\
\end{aligned}
\end{equation}

We now use the fact that $\rho_x(g_x)$ is a linear application of $\mathbb{R}$, thus:

\begin{equation}\label{item11}
\rho_x(g_x) \in GL(\mathbb{R}) \implies \exists(a(g_x), b(g_x))\in \mathbb{R}^{2} \quad s.t. \quad  \forall x \in \mathbb{R} \quad \rho_x(g_x)(x) = a(g_x)\cdot x + b(g_x)
\end{equation}

For notation purposes, we drop the dependence on $g_x$ of the coefficients of the real linear application $\rho_x(g_x)$, and we can rewrite Equation \eqref{item6}:

\begin{equation}\label{item12}
\begin{aligned}
    \rho_x(g_x)(f_1(x,y)) &= a\cdot f_1(x,y) + b &\\
\end{aligned}
\end{equation}

Hence, using  Equation \ref{item10} we can develop the term $\rho_x(g_x)^{(2N)}(f_1(x,y))$:

\begin{equation}\label{item13}
\begin{aligned}
     && \rho_x^{2N}(g_x)(f_1(x,y)) &= a^{2N} \cdot f_1(x,y) + b\cdot \sum_{i=0}^{2N-1}a^{i}&\\
     &&  &=  f_1(x,y) &\\
\end{aligned}
\end{equation}

Define $c = b\cdot \sum_{i=0}^{2N-1}a^{i}$, we have:

\begin{equation}\label{item14}
\begin{aligned}
     && a^{2N} \cdot f_1(x,y) + c &= f_1(x,y)&\\
     \iff && (a^{2N}-1) \cdot f_1(x,y)  + c &=  0 &\\
\end{aligned}
\end{equation}

Equation \eqref{item14} is verified $\forall(x,y)\in \mathbb{R}^2$. Let $((x_1,y_1),(x_2,y_2)) \in \mathbb{R}^2 \times \mathbb{R}^2$:

\begin{equation}\label{item15}
  \left\{
      \begin{aligned}
        (a^{2N}-1) \cdot f_1(x_1,y_1)  + c &=&  0\\
        (a^{2N}-1) \cdot f_1(x_2,y_2)  + c &=&  0\\
      \end{aligned}
    \right. \implies (a^{2N}-1) \cdot (f_1(x_1,y_1)-f_1(x_2,y_2)) =  0
\end{equation}

We can now derive conditions on $f_1$ or $(a,b)$. From Equation \eqref{item15} we know that either $f_1$ is constant or $(a^{2N}-1)=0 \implies a = 1$. If $a=1$, then Equation \eqref{item14} simplifies to $c=0 \implies b=0$. So either $\rho_x(g_x)$ is the identity function or $f_1$ is constant. The same argument applies to $f_2$ and $\rho_y(g_y)$, hence we have that either $f$ is constant or $\rho(g)=\text{Id}(\mathbb{R}^2)$. By plugging the second option in Equation \eqref{item3}, we have that $\rho(g)=\text{Id} \implies f$ is constant.

Hence $f$ is necessarily constant, which implies that $Z$ is a trivial representation.

\end{proof}

\section{Hyperparameters and neural networks architectures}
\label{app:hyperparams}

The code for our experiments is available at the following link: \url{https://github.com/Caselles/NeurIPS19-SBDRL}. 

More specifically, the architecture and hyperparameters used for all the VAEs is available here: \url{https://github.com/Caselles/NeurIPS19-SBDRL/blob/master/code/learn_4_dim_linear_disentangled_representation/vae/arch_torch_sans_cos_sin.py}.

All representation are learned using the same base architecture mentioned above. For the Forward-VAE model, we only add the action matrices mentioned in the description of the model. 

As for the training hyperparameters:

\begin{itemize}
    \item We use 15k transitions for training, batch sizes of 128, $\beta$ annealed from 1 using a factor of 0.995 at each batch.
    \item For optimization, we use Adam \citep{kingma2014adam} with the standard hyperparameters provided in PyTorch \citep{paszke2017automatic}.
    \item LSB-disentangled representation of dimension 4 (Forward-VAE trained as in Sec.\ref{sec:avae}): 35 epochs.
    \item SB-disentangled representation of dimension 2 (CCI-VAE variant trained as in Sec.\ref{sec:firstoption}): 11 epochs.
    \item Non-disentangled representation of dimension 2 (Auto-encoder, non-disentangled baseline): 11 epochs.
    \item SB-disentangled representation of dimension 4 (CCI-VAE trained as in Sec.\ref{sec:firstoption} but with 4 dimensions, baseline to control for the effect of the size of the representation): 11 epochs.
\end{itemize}

As for the experiments in Sec.\ref{sec:downstream}, we use the standard implementation of random forest in Scikit-Learn, and we only modify the hyperparameters indicated in the experiments.

\section{Details about Forward-VAE}
\label{app:algo}

\subsection{Definition of $\hat{A}$}
\label{app:algo1}

$A^{*}(g)$ is a 2x2 block-diagonal rotation matrix of dimension 4. For instance, if $g = g_x \in G_x$ is a translation on the x-axis, the corresponding matrix is: $A^{*}(g_x) = \begin{bmatrix} 
   \cos(n_x) & -\sin(n_x) & 0 & 0  \\
   \sin(n_x) & \cos(n_x) & 0 & 0  \\
   0 & 0 & 1 & 0 \\
   0 & 0 & 0 & 1 \\ \end{bmatrix}$. Similarly, for $g = g_y \in G_y$ which is a translation on the y-axis, the corresponding matrix is $A^{*}(g_y) = \begin{bmatrix}
   1 & 0 & 0 & 0 \\
   0 & 1 & 0 & 0 \\ 
   0&0&\cos(n_x) & -\sin(n_x) \\
   0&0&\sin(n_x) & \cos(n_x)   \\ \end{bmatrix}$.
   
Let's consider the environment in Sec.\ref{sec:env}. The agent has 4 actions: go left, right, up or down. We associate each action with a corresponding matrix with trainable weights. Thus, we associate actions go up and go down with a matrix of the form $\hat{A} = \begin{bmatrix} 
   \cdot & \cdot & 0 & 0  \\
   \cdot & \cdot & 0 & 0  \\
   0 & 0 & 1 & 0 \\
   0 & 0 & 0 & 1 \\ \end{bmatrix}$, and we associate actions go left and go right with a matrix of the form $\hat{A} = \begin{bmatrix} 
   1 & 0 & 0 & 0  \\
   0 & 1 & 0 & 0  \\
   0 & 0 & \cdot & \cdot \\
   0 & 0 & \cdot & \cdot \\ \end{bmatrix}$ 
where $\cdot$ represents trainable parameters. 

\subsection{Pseudo-code of Forward-VAE}
\label{app:algo2}

\begin{algorithm}
\caption{Pseudo-code for training procedure of Forward-VAE}\label{algo}
\begin{algorithmic}[1]
\State $\text{batch}=((o_t, .., o_{t+k}), (a_t,.., a_{t+k-1}))= (\mathbf{o},\mathbf{a})$
\For {$\text{batch} \text{ in dataset}$}
\State \begin{center} \textit{--- Forward model Loss--- } \end{center} 
\State
\State $\mathbf{z} \gets encoder\_mean(batch)$
\State $\mathbf{z_{before}} \gets z[:-1]$
\State $\mathbf{z_{after}} \gets z[1:]$ \textit{\# targets}
\State $\mathbf{\hat{A}} \gets [\hat{A}(a_t), .., \hat{A}(a_{t+k-1})]$ \textit{\# actions matrices corresponding to given action sequence}
\State $\mathbf{z_{prediction}} \gets \mathbf{\hat{A}} \cdot \mathbf{z_{before}}$ \textit{\# predictions}
\State $\mathcal{L}_{forward} (batch) \gets MeanSquaredError(\mathbf{z_{prediction}}, \mathbf{z_{after}} )$

\State \begin{center} \textit{--- VAE Loss (reconstruction and KL) --- } \end{center}
\State
\State $\mathbf{z} \gets encoder\_sample(batch)$
\State $\mathbf{\hat{o}} \gets decoder(\mathbf{z})$
\State $\mathcal{L}_{recon} (batch) \gets MeanSquaredError(\mathbf{\hat{o}},\mathbf{o})$
\State $\mathcal{L}_{KL} (batch) \gets KL\_divergence(\mathbf{z},\mathcal{N}(0,1))$

\State \begin{center} \textit{--- Backpropagation --- } \end{center}
\State
\State $ \mathcal{L}_{Forward-VAE}(batch) \gets \mathcal{L}_{recon}(batch) + \mathcal{L}_{KL}(batch) + \mathcal{L}_{forward}(batch)$
\State $encoder, decoder, (\hat{A}_1, .., \hat{A}_j) \gets Backpropagation(\mathcal{L}_{Forward-VAE} (batch)$

\EndFor
\end{algorithmic}
\end{algorithm}

\newpage

\section{Additional results}
\label{app:extra-results}

We observe that the mean squared difference between the ideal matrices $(A_i)_{i=1..4}$ and the learned matrices $(\hat{A}_i)_{i=1..4}$ is very small (order of $10^{-4}$). Hence, we have :

$$\hat{A}(\text{go left / go right}) \approx A^*(\text{go left / go right}) = 
\begin{bmatrix} 
\cos(\pm \alpha) & -\sin(\pm\alpha) & 0 & 0 \\
\sin(\pm\alpha) & \cos(\pm\alpha) & 0 & 0 \\
0 & 0 & 1 & 0 \\
0 & 0 & 0 & 1\\
\end{bmatrix}$$

$$\hat{A}(\text{go up / go down}) \approx A^*(\text{go up / go down}) = 
\begin{bmatrix} 
1 & 0 & 0 & 0 \\
0 & 1 & 0 & 0 \\
0 & 0 & \cos(\pm \alpha) & -\sin(\pm \alpha)  \\
0 & 0 &\sin(\pm \alpha) & \cos(\pm \alpha) \\
\end{bmatrix}$$

The result is quite surprising as we do not have completely explicitly optimized for this matrix (at least for the cos/sin part). Plus there is no instability in training. 

One issue with the fact that the approximation is not exact, is unstability with composition. Rotation matrices' determinants are stable with composition, as we have: 

$$\det(AB) = \det(A) \det(B)$$

As rotation matrices have a determinant equal to $1$, the composition operation is cyclic for rotations. 

However, as $(\hat{A})_{i=1..4}$ are only approximation of rotation matrices, their determinant is approximately $1$ but not exactly. This is why, as many compositions are performed, the determinant of the resulting matrix either collapses to zero or explodes to $+\infty$. We provide evidence for this phenomenon in Fig.\ref{fig:unstability_rotation}.

\begin{figure}[!h]
\centering
\includegraphics[scale=.6]{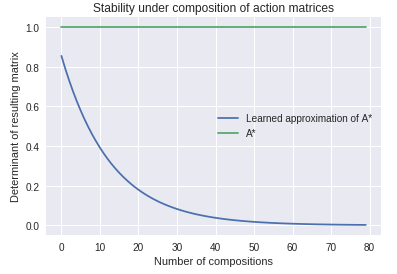}
\caption{Determinant of real ($A^{*}$) and learned rotation matrix ($\hat{A}$) as a function of number of compositions. As many compositions are performed, the determinant of the approximation of the rotation matrix $A^{*}$ collapses to zero.}
\label{fig:unstability_rotation}
\end{figure}

\end{document}